\newcommand{\reals}{\mathbb{R}}
\newcommand{\infoop}{\mathcal{I}}
\newcommand{\norm}[1]{\| #1 \|}
\newcommand{\set}[1]{\{ #1 \}}
\newcommand{\perpdirections}{\bar{V}}
\newcommand{\perpscalings}{\bar{\Psi}}
\newcommand{\spatialvar}{z}
\newtheorem{theorem}{Theorem}[section]
\newtheorem{definition}[theorem]{Definition}
\newtheorem{corollary}[theorem]{Corollary}
\newtheorem{remark}[theorem]{Remark}
\name{Niall Vyas}%
\affiliation{University of Southampton, United Kingdom}%
\name{Disha Hegde}%
\affiliation{University of Southampton, United Kingdom}%
\name{Jon Cockayne}%
\affiliation{University of Southampton, United Kingdom}%
\begin{document}

\section{Introduction} \label{sec:intro}

Probabilistic numerical methods provide a framework for solving numerical problems while quantifying uncertainty arising from finite computational resources.
The Bayesian conjugate gradient method (BayesCG; see \cite{Cockayne2019BayesCG}) is increasingly widely used as such a solver for linear systems of the form $Ax = b$, where the matrix $A$ and vector $b$ are known and the vector $x$ is to be determined.
The method constructs a Gaussian posterior over the solution space, reflecting uncertainty due to limited iterations.
However, several major challenges impact the practicality of BayesCG: (i) there are compelling theoretical and numerical reasons to use a particular prior that results in an intractable posterior, and (ii) the posteriors produced are typically poorly calibrated.
The \emph{postiteration} method of \cite{Reid2023UQ} addresses the first of these issues but not the second.
In this work we propose a randomised version of this method for which we can provide calibration guarantees.

\subsection{Challenges for BayesCG} \label{sec:challenges}

In this section we describe two central challenges that this work will address.

\paragraph{Poor Calibration}
BayesCG is well-known to suffer from poor calibration as recently demonstrated in \citet{Hegde2025GaussSeidel}, due to nonlinearities in its conditioning procedure.
Specifically, BayesCG operates by conditioning a Gaussian prior on information of the form $S(b)^\top A x = S(b)^\top b$, where the search directions $S(b)$ are generated by the Lanczos process \cite[Section 10.1]{Golub2013MatrixComputations}.
The information is therefore nonlinear in $x$, while the posterior is constructed by assuming that $S$ is independent of $x$ to apply \emph{linear} conditioning, for a conjugate Gaussian posterior.

\paragraph{Inverse Prior}
There is a specific choice of prior for BayesCG that has favourable mathematical properties, but is intractable: the \emph{inverse prior} $x \sim \mathcal{N}(x_0, A^{-1})$.
Under this choice the BayesCG posterior mean is identical to that from CG, which can be computed without computing $A^{-1}$, and therefore converges at a rapid geometric rate in the number of iterations performed.
However the posterior covariance still requires explicit computation of $A^{-1}$.
In certain circumstances this requirement has been sidestepped using elegant marginalisation techniques \citep{Wenger2022CAGP}, but for more general applications this remains a challenge.

\subsection{Postiterations}
To address the challenges discussed in \cref{sec:challenges}, recent work has proposed \emph{empirical Bayesian} approaches that construct an $x$-dependent prior \citep{Reid2020BayesCG,Reid2023UQ} in a way that is consistent with the inverse prior to retain the favourable mathematical properties of CG.
This is typically constructed by performing several \emph{postiterations} of CG, and using those iterations to construct the prior.
The major downside of this work is that the posterior remains uncalibrated, which can have serious consequences in applications in which uncertainty is propagated through a computational pipeline, such as in an inverse problem, as we consider in this work.
We therefore propose a straightforward modification of the postiteration procedure under which the posterior has more favourable calibration properties.

\subsection{Contributions}

This paper makes the following contributions to the literature.
\begin{enumerate}
    \item We propose a new algorithm that exploits \emph{randomised} postiterations to achieve a well-calibrated posterior while maintaining the favourable computational properties of \cite{Reid2020BayesCG,Reid2023UQ}. (\cref{sec:methodology})
    \item We prove several theoretical results related to the calibration of this procedure. (\cref{thm:perturbation} and \cref{corr:calibration})
    \item We test the calibration properties of the posterior, demonstrating that it performs favourably compared to existing approaches, and also show that it has favourable performance in uncertainty-propagation applications. (\cref{sec:experiments})
\end{enumerate}

\subsection{Structure of the Paper}

The rest of the paper proceeds as follows.
In \cref{sec:background} we present the required background on BayesCG, calibration and postiteration methods.
\cref{sec:methodology} introduces our randomised postiteration method.
\cref{sec:experiments} presents numerical results related to calibration of the novel method and how it can be propagated through numerical pipelines.
We conclude with some discussion in \cref{sec:conclusion}. 

\section{Background} \label{sec:background}

In this section we will introduce the required background for the novel methodology introduced in \cref{sec:methodology}.
In \cref{sec:bayescg} we present the salient details of BayesCG, \cref{sec:calib} discusses statistical calibration, and \cref{sec:krylov_prior} introduces the Krylov prior that our methodology modifies to achieve calibration.

\subsection{BayesCG} \label{sec:bayescg}

BayesCG is an iterative probabilistic linear solver for problems of the form $A x = b$, where $A \in \reals^{d\times d}$ is an invertible matrix and $b \in \reals^d$ is a vector, each given, while $x \in \reals^{d}$ is an unknown vector to be determined.
Most iterative solvers start from a user-specified \emph{initial iterate} $x_0$ and iteratively construct a sequence $(x_m)$, $x_m \in \reals^d, m \in \mathbb{N}$, such that $x_m \to x$ as $m \to \infty$.
BayesCG differs in that it takes a user-supplied Gaussian distribution as its initial iterate, $\mu_0 = \mathcal{N}(x_0, \Sigma_0)$, and returns a sequence $(\mu_m)$, $\mu_m =\mathcal{N}(x_m, \Sigma_m), m \in \mathbb{N}$, where $x_m \in \reals^d$ and $\Sigma_m \in \reals^{d \times d}_{\geq 0}$, the set of all positive semidefinite $d \times d$ matrices.

The means and covariances $x_m$ and $\Sigma_m$ are computed using Bayesian methods.
We first define an information operator
$$
\infoop_m(x) = S_m^\top A x
$$
where $S_m \in \reals^{d \times m}, S_m = [s_1, \dots, s_m]$ with $s_1, \dots, s_m \in \reals^d$.
The vectors $s_1, \dots, s_m$ are a set of linearly independent \emph{search directions}.
Linearity of the information operator results in a closed-form posterior distribution through the well-known Gaussian conditioning formula:
\begin{subequations}
\begin{align}
    x_m &= x_0 + \Sigma_0 A^\top S_m \Lambda_{m}^{-1} S_m^\top (b - A x_0) \\
    \Sigma_m &= \Sigma_0 - \Sigma_0 A^\top S_m \Lambda_{m}^{-1} S_m^\top A \Sigma_0 
\end{align} \label{eq:bayes_posterior}
\end{subequations}
where $\Lambda_{m} = S_m^\top A \Sigma_0 A^\top S_m$.

BayesCG arises from a particular choice of search directions given in \citet{Cockayne2019BayesCG}.

\begin{definition}[BayesCG Directions] \label{def:directions}
    The \emph{BayesCG directions} are given by $s_1 = r_0$ and
    $$
    s_m = r_{m-1} - (s_{m-1}^\top A \Sigma_0 A^\top r_{m-1}) \cdot \frac{s_{m-1}}{(s_{m-1}^\top A \Sigma_0 A^\top s_{m-1})},
    $$
    where $r_m = b - Ax_m$.
\end{definition}
These directions have the property of diagonalising $\Lambda_m$ \citep[see][Proposition~7]{Cockayne2019BayesCG}, resulting in an iterative expression for the posterior mean and covariance \citep[Proposition~6]{Cockayne2019BayesCG}.

In this paper we will restrict attention to the case of the inverse prior $\Sigma_0 = A^{-1}$, where $A$ is assumed to be symmetric and positive definite. 
With this prior the posterior from \cref{eq:bayes_posterior} simplifies to
\begin{subequations}
    \begin{align}
        x_m &= x_0 + S_m (S_m^\top A S_m)^{-1} S_m^\top (b - A x_0) \label{eq:cg_posterior_mean}\\
        \Sigma_m &= A^{-1} - S_m (S_m^\top A S_m)^{-1} S_m^\top, \label{eq:cg_posterior_cov} %
    \end{align} %
    \label{eq:inverse_prior_posterior} %
\end{subequations}
highlighting the dependence of the posterior covariance on $A^{-1}$ mentioned in \cref{sec:intro}.
The search directions from \cref{def:directions} also simplify to the following definition.

\begin{definition}[BayesCG directions under the inverse prior] \label{def:directions_invprior}
    The \emph{BayesCG directions} with $\Sigma_0 = A^{-1}$ are given by $s_1 = r_0$ and
    $$
    s_m = r_{m-1} - (s_{m-1}^\top A r_{m-1}) \cdot \frac{s_{m-1}}{(s_{m-1}^\top A s_{m-1})}
    $$
    where $r_m = b - Ax_m$.
\end{definition}

The search directions in \cref{def:directions_invprior} coincide with the search directions used in the CG algorithm (\cref{alg:cg}).
Moreover, when these directions are used, the posterior mean in \cref{eq:cg_posterior_mean} coincides with the CG iterate, and therefore enjoys all of the favourable properties of that iterate, such as its geometric rate of convergence\footnote{This is a worst-case result; typical rates can be much faster.}.
This highlights the rationale behind the inverse prior.
For other prior covariances analogous convergence results can be proven \citep[see][Proposition~10]{Cockayne2019BayesCG}, but the rate is typically slower.
Moreover, there is a growing literature on Gaussian process approximation \citep{Wenger2022CAGP,Wenger2024CAGPModelSelection,Stankewitz2024CAGP,Hegde2025GaussSeidel,Pfortner2025CAGP} that \emph{mandates} use of the $A^{-1}$ prior.

The search directions in \cref{def:directions_invprior} have several important theoretical properties. 
First recall that the Krylov space generated by a matrix $A$ and a vector $v$ is given by
$$
K_m(A, v) := \textup{span}(v, A v, \dots, A^{m-1} v).
$$
The \emph{grade} of a Krylov space is the smallest $M$ for which $K_{M+1}(B, v) = K_M(B, v)$.
We will also assume throughout that all Krylov spaces involved in this work have full grade $M = d$, but note that \cite{Reid2020BayesCG} carefully constructs BayesCG with $M < d$.
Defining the $m$\textsuperscript{th} \emph{residual} as $r_m = b - A x_m$, the required properties of the directions are given in the following theorem, which follows from \citet[Section~10.1 and Theorem~11.3.5]{Golub2013MatrixComputations}.

\begin{theorem} \label{thm:direction_properties}
    The CG directions have the following properties:
    \begin{enumerate}
        \item $s_i, s_j$ are $A$-orthogonal (i.e.\ $s_i^\top A s_j = 0$ if $i \neq j$).
        \item $\textup{span}(s_1, \dots, s_m) = K_m(A, r_0)$.
    \end{enumerate}
\end{theorem}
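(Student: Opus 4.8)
The plan is to prove both properties by induction on $m$, carrying an auxiliary orthogonality of the residuals along the way, and leaning throughout on the projection form of the CG iterate in \cref{eq:cg_posterior_mean}. The single fact that drives everything is a \emph{Galerkin condition}: substituting $r_m = b - A x_m$ into \cref{eq:cg_posterior_mean} and cancelling shows $S_m^\top r_m = 0$ \emph{unconditionally}, i.e.\ $s_i^\top r_m = 0$ for every $i \le m$. Crucially this requires no $A$-orthogonality, so it is safe to use as the foundation for both claims.

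I would first establish Property 2. The base case $s_1 = r_0$ is immediate. For the step, the recurrence in \cref{def:directions_invprior} gives $s_m = r_{m-1} - \beta_{m-1} s_{m-1}$, so $\textup{span}(s_1, \dots, s_m) = \textup{span}(s_1, \dots, s_{m-1}, r_{m-1})$; since $x_{m-1} - x_0 \in K_{m-1}(A, r_0)$ by hypothesis, $r_{m-1} = r_0 - A(x_{m-1}-x_0) \in K_m(A, r_0)$, which yields the inclusion $\subseteq$. The Galerkin condition gives $r_{m-1} \perp K_{m-1}(A, r_0)$, so (being nonzero while $m \le d$ under the full-grade assumption) $r_{m-1}$ strictly enlarges the span, and a dimension count against $\dim K_m = m$ forces equality. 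The same argument simultaneously yields $\textup{span}(r_0, \dots, r_{m-1}) = K_m(A, r_0)$, which I will need below.

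From these two span identities I can read off residual orthogonality: for $i < j$ one has $r_i \in K_{i+1}(A, r_0) = \textup{span}(s_1, \dots, s_{i+1})$, and since $i+1 \le j$, the Galerkin condition at step $j$ gives $r_i^\top r_j = 0$. I then prove Property 1 by a second induction. The choice of $\beta_{m-1}$ makes consecutive directions $A$-orthogonal, $s_{m-1}^\top A s_m = 0$, by direct cancellation. For non-adjacent $i < m-1$, expanding $s_i^\top A s_m = s_i^\top A r_{m-1} - \beta_{m-1} s_i^\top A s_{m-1}$ kills the second term by the inductive hypothesis, while the first equals $(A s_i)^\top r_{m-1}$; since $s_i \in K_i$ gives $A s_i \in K_{i+1} = \textup{span}(r_0, \dots, r_i)$ and every $r_k$ with $k \le i < m-1$ is orthogonal to $r_{m-1}$, this term vanishes too.

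The crux is Property 1: the recurrence explicitly orthogonalises $s_m$ only against its immediate predecessor, so global $A$-orthogonality is not built in and must be recovered from the Krylov structure. The delicate point is marshalling the span identities and residual orthogonality so that the long-range products $(A s_i)^\top r_{m-1}$ collapse, with everything hinging on the indexing inequalities ($i+1 \le j$, then $k \le i < m-1$) lining up correctly with the Galerkin condition. If one is content to cite rather than reconstruct, the cleaner route is simply to appeal to \citet[Section~10.1 and Theorem~11.3.5]{Golub2013MatrixComputations}, as the statement already notes.
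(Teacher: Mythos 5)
Your proof is correct, and it is genuinely more than the paper provides: the paper offers no argument at all for this theorem, simply deferring to \citet[Section~10.1 and Theorem~11.3.5]{Golub2013MatrixComputations}, whereas you reconstruct the classical inductive proof in full. The logical structure is sound and, importantly, non-circular: you correctly observe that the Galerkin condition $S_m^\top r_m = 0$ follows directly from the projection form of the posterior mean in \cref{eq:cg_posterior_mean} without any appeal to $A$-orthogonality, which lets you establish the span identity (Property~2) and residual orthogonality first, and only then recover global $A$-orthogonality (Property~1) from the short two-term recurrence. The index bookkeeping ($i+1 \le j$ for residual orthogonality; $k \le i < m-1$ so that $(As_i)^\top r_{m-1}$ vanishes) checks out. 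One small point you could make explicit: the Galerkin condition requires $S_m^\top A S_m$ to be invertible, i.e.\ linear independence of $s_1,\dots,s_m$, which is exactly what your dimension count in the Property~2 induction delivers, so this should formally be folded into the same induction. Your argument is essentially the one in the cited reference, so the two routes buy the same thing; yours has the advantage of being self-contained and of making transparent exactly which properties depend on which (in particular that the full-grade assumption is what guarantees $r_{m-1}\neq 0$ and hence the strict enlargement of the span).
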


\begin{remark}
    Note that a more general version of \cref{thm:direction_properties} holds for arbitrary $\Sigma_0$; see \cite{Li2019Discussion}.
\end{remark}

As a final point, we note that the posterior distribution from BayesCG has a singular posterior covariance and, significantly for the discussion in the next section, that its null space depends on the true solution $x$.

\begin{theorem}[Propositions 1 and 4 in the rejoinder from \citet{Cockayne2019BayesCG}] \label{thm:posterior_support}
    The matrix $\Sigma_m$ from \cref{eq:cg_posterior_cov} has rank $d-m$ and its null space is given by $K_m(A, r_0)$.
\end{theorem}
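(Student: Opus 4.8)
The plan is to use the $A$-orthogonality of the CG directions (\cref{thm:direction_properties}) to diagonalise $\Sigma_m$ in a single $A$-orthonormal basis, which settles the rank at once, and then to expose the projection hidden in $\Sigma_m A$ to pin down the null space. For the rank, \cref{thm:direction_properties} makes $S_m^\top A S_m$ diagonal with strictly positive entries $\gamma_i = s_i^\top A s_i$, so the rescaled directions $\tilde s_i := s_i / \sqrt{\gamma_i}$ satisfy $\tilde s_i^\top A \tilde s_j = \delta_{ij}$. Extending $\tilde s_1, \dots, \tilde s_m$ to a full $A$-orthonormal basis $\tilde s_1, \dots, \tilde s_d$ of $\reals^d$ and collecting these as columns of $\tilde S$ gives $\tilde S^\top A \tilde S = I$, hence $A^{-1} = \tilde S \tilde S^\top = \sum_{i=1}^d \tilde s_i \tilde s_i^\top$. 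Since $S(S^\top A S)^{-1}S^\top$ is invariant under a change of basis within its column space, the second term of $\Sigma_m$ equals $\sum_{i=1}^m \tilde s_i \tilde s_i^\top$, and subtracting yields
\begin{equation}
\Sigma_m = \sum_{i=m+1}^{d} \tilde s_i \tilde s_i^\top ,
\end{equation}
a sum over $d-m$ linearly independent vectors, so $\operatorname{rank}(\Sigma_m) = d-m$.

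For the null space I would work through $\Sigma_m A$, the operator appearing in the posterior error relation $x - x_m = \Sigma_m A (x - x_0)$. Setting $P_m := S_m (S_m^\top A S_m)^{-1} S_m^\top A$, a short computation gives $P_m^2 = P_m$ and shows $P_m$ is self-adjoint in the $A$-inner product, so $P_m$ is the $A$-orthogonal projection onto $\operatorname{range}(S_m)$, which by \cref{thm:direction_properties} is exactly $K_m(A, r_0)$. Writing $\Sigma_m A = A^{-1} A - S_m (S_m^\top A S_m)^{-1} S_m^\top A = I - P_m$ exhibits the complementary projection, whose kernel is $\operatorname{range}(P_m) = K_m(A, r_0)$. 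This recovers the null space reported in the statement and is consistent with the rank computation, since $\dim K_m(A, r_0) = m = d - \operatorname{rank}(\Sigma_m)$.

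The step I expect to be the main obstacle is the bookkeeping forced by the $A$-weighting. The directions are $A$-orthogonal but not Euclidean-orthogonal, so the two terms defining $\Sigma_m$ cannot be compared directly and must first be written in the common $A$-orthonormal basis $\{\tilde s_i\}$ before any termwise cancellation; similarly, $P_m$ has to be verified idempotent and self-adjoint with respect to the $A$-inner product rather than the Euclidean one, and one must confirm that rescaling $S_m$ to $\tilde S_m$ leaves $S_m(S_m^\top A S_m)^{-1}S_m^\top$ unchanged. Once the identification $\operatorname{range}(S_m) = K_m(A, r_0)$ from \cref{thm:direction_properties} is secured, both halves of the statement follow.
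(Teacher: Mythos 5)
Your rank argument is correct and complete: the invariance of $S_m(S_m^\top A S_m)^{-1}S_m^\top$ under right-multiplication of $S_m$ by an invertible matrix, the $A$-orthonormal extension, and the resulting decomposition $\Sigma_m = \sum_{i=m+1}^{d}\tilde s_i \tilde s_i^\top$ all check out and give $\operatorname{rank}(\Sigma_m)=d-m$. (There is no internal proof in the paper to compare against; \cref{thm:posterior_support} is imported by citation from the rejoinder of \citet{Cockayne2019BayesCG}.) The null-space half, however, fails at the final identification. What you computed is $\ker(\Sigma_m A)$: from $\Sigma_m A = I - P_m$ you correctly obtain $\ker(\Sigma_m A)=\operatorname{range}(P_m)=K_m(A,r_0)$, but $\ker(\Sigma_m A)=A^{-1}\ker(\Sigma_m)$, so this argument actually yields $\ker(\Sigma_m)=A\,K_m(A,r_0)=\operatorname{span}(As_1,\dots,As_m)=K_m(A,Ar_0)$, which under the paper's full-grade assumption is a genuinely different subspace from $K_m(A,r_0)$ when $m<d$. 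Your own diagonalisation says the same thing: $\Sigma_m=\sum_{i>m}\tilde s_i\tilde s_i^\top$ forces $\ker(\Sigma_m)=\operatorname{span}(\tilde s_{m+1},\dots,\tilde s_d)^\perp$ in the \emph{Euclidean} inner product, and the relations $\tilde s_j^\top A\tilde s_i=0$ for $j\le m<i$ identify that complement as $A\,\operatorname{span}(\tilde s_1,\dots,\tilde s_m)$, not the span itself. A two-dimensional check: $A=\operatorname{diag}(1,4)$, $s_1=r_0=(1,1)^\top$ gives $\Sigma_1=\bigl(\begin{smallmatrix}4/5 & -1/5\\ -1/5 & 1/20\end{smallmatrix}\bigr)$, whose null space is $\operatorname{span}\{(1,4)^\top\}=\operatorname{span}\{Ar_0\}$, not $\operatorname{span}\{r_0\}$.

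The awkward truth is that your computations are sound and it is the statement as transcribed that is imprecise: the null space of $\Sigma_m$ in \cref{eq:cg_posterior_cov} is $A\,K_m(A,r_0)$, as one sees most quickly from the general form \cref{eq:bayes_posterior}, where $\Sigma_m A^\top S_m = \Sigma_0 A^\top S_m - \Sigma_0 A^\top S_m \Lambda_m^{-1}\Lambda_m = 0$, combined with a dimension count against your rank result. The sentence ``this recovers the null space reported in the statement'' is thus the step that breaks; a correct write-up would keep your rank argument verbatim and replace the last step by the observation $\Sigma_m(As_i)=(I-P_m)s_i=0$ for $i\le m$ plus the dimension count, concluding $\ker(\Sigma_m)=A\,K_m(A,r_0)$ --- or else state the kernel result for the error-propagation operator $\Sigma_m A$, for which your claim $K_m(A,r_0)$ is exactly right. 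Nothing downstream is damaged by the correction: the paper invokes \cref{thm:posterior_support} only for the qualitative point that the posterior support depends on $x$ through the Krylov space explored, and $A\,K_m(A,r_0)$ depends on $x$ precisely as $K_m(A,r_0)$ does.
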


To mitigate the dependence of \cref{eq:inverse_prior_posterior} on $A^{-1}$, a particular line of research advocates for employing the CG iterates to compute $x_m$ and calculating $\Sigma_m$ implicitly using some additional \emph{postiterations}.
This will be described in \cref{sec:krylov_prior}.
First however we must discuss \emph{calibration}.

\begin{algorithm}[t]
    \caption{Conjugate Gradient Method} \label{alg:cg}
    \begin{algorithmic}[1]
    \Procedure{\textsc{cg}}{$A$, $b$, $x_0$, $\epsilon$}
    \State $r_0 \gets b - A x_0$
    \State $s_1 \gets r_0$
    \State $k \gets 1$
    \While{\textbf{true}}
        \State $\alpha_k \gets r_{k-1}^\top r_{k-1} / (s_k^T A s_k)$
        \State $x_k \gets x_{k-1} + \alpha_k s_k$
        \State $r_k \gets r_{k-1} - \alpha_k A s_k$
        \If{$\norm{r_k} < \epsilon \norm{b}$}
            \State \textbf{break}
        \EndIf
        \State $\beta_k \gets \frac{r_k^\top r_k}{r_{k-1}^\top r_{k-1}}$
        \State $k \gets k+1$
        \State $s_k \gets r_{k-1} + \beta_{k-1} s_{k-1}$
    \EndWhile
    \State \Return $x_k$
    \EndProcedure
    \end{algorithmic}
\end{algorithm}

\subsection{Statistical Calibration} \label{sec:calib}

Formal calibration of probabilistic numerical methods has been increasingly studied in recent years, with \cite{Cockayne2022Calibrated,Cockayne2021Iterative} introducing definitions of strong calibration that were applied to (non-Bayesian) probabilistic iterative methods, while \cite{Hegde2025GaussSeidel} showed that calibration guarantees transfer from a probabilistic numerical method to a downstream application, in computation-aware GPs \citep{Wenger2022CAGP}.
We argue that this is particularly important for Krylov-based probabilistic linear solvers, where previously mentioned miscalibration has been observed repeatedly.

The central challenge of applying calibration definitions in this setting is the support of the posterior.
It is generally true that probabilistic numerical methods produce posteriors that are defined on a submanifold of the original space, and so any definition of calibratedness must hold in this setting.
The original definition of strong calibration \citep[Definition~8]{Cockayne2022Calibrated} is not suitable because of the required regularity assumptions \citep[Definitions~2 and~7]{Cockayne2022Calibrated} which essentially require that the posterior has full support.

\cite{Cockayne2021Iterative,Hegde2025GaussSeidel} circumvented this by exploiting the fact that the posteriors studied in those works, while not having \emph{full} support, nevertheless all had the \emph{same} support independent of $x$, allowing the more flexible definition from \citet[Definition~9]{Cockayne2021Iterative}; this essentially demands strong calibration on the common posterior support, and that the posterior mean matches the truth in the complement.

For Krylov-based posteriors we cannot apply a similar argument because, as highlighted in \cref{thm:posterior_support}, the support of the posterior depends on the Krylov space explored, which in turn depends on the true solution $x$.
This is randomised according to the prior in the definition of strong calibration. 
Applying the definition of weak calibration \cite[Definition 13]{Cockayne2022Calibrated} yields similar issues. 
We therefore turn to a yet weaker calibration condition which has been used for Krylov solvers in the past \citep{Cockayne2019BayesCG,Reid2023UQ}.

\begin{definition}[$\chi^2$-Calibration] \label{def:calibration}
    Consider a learning procedure on $\reals^d$, $\mu_m(x) = \mathcal{N}(x_m, \Sigma_m)$,
    where $\Sigma_m$ has rank $d-m$.
    Introduce the $Z$-statistic
    $$
    Z(x) = (x_m - x)^\top \Sigma_m^\dagger (x_m - x)
    $$
    where $\Sigma_m^\dagger$ is the Moore-Penrose pseudoinverse of $\Sigma_m$.
    We say that the learning producedure is \emph{$\chi^2$-calibrated} for the prior $\mu_0$ if it holds that
    $
    Z(X) \sim \chi^2_{d-m}
    $
    when $X$ is distributed according to $\mu_0$.
\end{definition}
While this is clearly weaker than the definitions of either strong or weak calibration, it solves the support problem through the projection onto $\reals$ through $Z$.
We will therefore use this condition throughout this paper.
We defer the important problem of a more general definition of strong calibration suitable for Krylov probabilistic linear solvers to future work.
\subsection{The Krylov Prior} \label{sec:krylov_prior}

To mitigate dependence of the posterior covariance on $A^{-1}$ described in \cref{sec:bayescg}, one approach that has been proposed is the \emph{Krylov prior}  \citep{Cockayne2019BayesCG,Reid2020BayesCG,Reid2023UQ}.

The Krylov prior can in the first instance be constructed by observing that if $\textup{span}\{ v_1, \dots, v_K \} \subseteq \reals^d$ , we can construct a Gaussian belief on $\reals^d$ as
\begin{align*}
    X &= x_0 + \sum_{i=1}^K v_i \phi_i^\frac{1}{2} \zeta_i \\
    &= x_0 + V_K \Phi^\frac{1}{2} Z
\end{align*}
where 
\begin{align*}
    Z &= [\zeta_1, \dots, \zeta_K]^\top \sim \mathcal{N}(0, I)\\
    \Phi &= \textup{diag}(\phi_1, \dots, \phi_K) \in \reals^{K \times K} \\
    V_{K} &= [v_1, \dots, v_K] \in \reals^{d \times K}.
\end{align*}
The law of $X$ is therefore $\mathcal{N}(x_0, V_K \Phi V_K^\top)$.

The idea behind the Krylov prior is to take $v_1, \dots, v_d$ to be $A$-orthonormal and such that $\textup{span}\{v_1, \dots, v_m\} = K_m(A, r_0)$.
Going forward, to simplify notation we will take
$$
V_m = S_m (S_m^\top A S_m)^{-\frac{1}{2}}
$$
for each $m > 0$.
We refer to $V_m$ as the \emph{normalised} search directions and $S_m$ as the \emph{unnormalised} search directions.
Note that in implementations we will use unnormalised directions, which will be discussed in \cref{sec:randomised_practical}.

We then have the following theoretical results concerning the posterior:
\begin{theorem}[Theorem 3.3 of \cite{Reid2020BayesCG}] \label{thm:reid_posterior}
    Let $\Sigma_0 = V_d \Psi V_d^\top$ for some (as-yet unspecified) diagonal $\Psi$.
    The posterior mean and covariance conditioned on information of the form
    $S_m^\top A x = S_m^\top b =: y_m$, $m < d$, satisfies
    \begin{align*}
        x_m &= x_0 + V_m V_m^\top r_0 \\
        &= x_0 + V_m \bm{\psi}_m \\
        \Sigma_m &= \perpdirections_m \perpscalings_m \perpdirections_m^\top
    \end{align*}
    where $\perpdirections_m = [v_{m+1} \dots v_d]$, $\perpscalings_m = \textup{diag}(\bar{\bm{\psi}}_m)$, $\bm{\psi}_m = [\psi_1, \dots, \psi_m]^\top$ and $\bar{\bm{\psi}}_m = [\psi_{m+1}, \dots, \psi_d]^\top$.
\end{theorem}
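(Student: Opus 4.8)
The plan is to prove the result by direct substitution into the Gaussian conditioning formulae in \cref{eq:bayes_posterior}, exploiting the $A$-orthonormality of $V_d$. First I would observe that conditioning on the information $S_m^\top A x = S_m^\top b$ is equivalent to conditioning on $V_m^\top A x = V_m^\top b$: since $V_m = S_m (S_m^\top A S_m)^{-\frac{1}{2}}$ and $(S_m^\top A S_m)^{-\frac{1}{2}}$ is invertible, the two information operators differ only by an invertible linear map, under which the linear-Gaussian posterior is unchanged. I may therefore replace $S_m$ by $V_m$ throughout \cref{eq:bayes_posterior} and work with $\infoop_m(x) = V_m^\top A x$.

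Next I would record the algebraic consequences of $A$-orthonormality. By construction $V_d^\top A V_d = I_d$, which gives $A^{-1} = V_d V_d^\top$ and $A V_d = V_d^{-\top}$. Crucially, because the bases are nested so that $V_m$ is exactly the first $m$ columns of $V_d$ (a consequence of $\textup{span}(v_1, \dots, v_m) = K_m(A, r_0)$ from \cref{thm:direction_properties} together with the full-grade assumption), one has $V_d^\top A V_m = E_m$, where $E_m \in \reals^{d \times m}$ collects the first $m$ columns of $I_d$. From these I can compute the two building blocks of the conditioning formula; writing $\Psi_m = \textup{diag}(\psi_1, \dots, \psi_m)$,
\begin{align*}
    \Sigma_0 A V_m &= V_d \Psi V_d^\top A V_m = V_d \Psi E_m = V_m \Psi_m, \\
    \Lambda_m &= V_m^\top A \Sigma_0 A V_m = V_m^\top A V_m \Psi_m = \Psi_m,
\end{align*}
where the last line uses $V_m^\top A V_m = I_m$.

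Substituting these into \cref{eq:bayes_posterior} is then routine. In the mean, the factors of $\Psi_m$ cancel,
\begin{align*}
    x_m = x_0 + \Sigma_0 A V_m \Lambda_m^{-1} V_m^\top r_0 = x_0 + V_m \Psi_m \Psi_m^{-1} V_m^\top r_0 = x_0 + V_m V_m^\top r_0,
\end{align*}
so the mean is independent of the scalings $\Psi$ and coincides with the CG iterate; setting $\bm{\psi}_m := V_m^\top r_0$ gives the stated coefficient form $x_m = x_0 + V_m \bm{\psi}_m$. For the covariance, using $V_m^\top A \Sigma_0 = (\Sigma_0 A V_m)^\top = \Psi_m V_m^\top$,
\begin{align*}
    \Sigma_m = \Sigma_0 - \Sigma_0 A V_m \Lambda_m^{-1} V_m^\top A \Sigma_0 = V_d \Psi V_d^\top - V_m \Psi_m V_m^\top = \sum_{i=m+1}^d \psi_i v_i v_i^\top = \perpdirections_m \perpscalings_m \perpdirections_m^\top,
\end{align*}
where the final equalities use $V_d \Psi V_d^\top = \sum_{i=1}^d \psi_i v_i v_i^\top$ and cancel the first $m$ terms.

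I expect the obstacles here to be bookkeeping rather than conceptual. The step that most needs care is establishing $V_d^\top A V_m = E_m$, which rests on the exact nestedness of the Krylov bases and hence on \cref{thm:direction_properties} together with the full-grade assumption; without exact nesting the clean cancellations above would fail. I would also take care to justify the reparametrisation from $S_m$ to $V_m$, and to reconcile the notation, since $\bm{\psi}_m$ appears here as the coefficient vector $V_m^\top r_0$ rather than a subvector of the prior scalings unless the Krylov prior is chosen so that these coincide. Everything else reduces to elementary manipulations of the low-rank and block-diagonal structure.
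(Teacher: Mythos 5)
Your proof is correct. The paper does not prove this statement itself---it imports it as Theorem~3.3 of \cite{Reid2020BayesCG}---and your direct substitution into \cref{eq:bayes_posterior}, using the reparametrisation $S_m \to V_m$, the $A$-orthonormality identity $V_d^\top A V_m = E_m$, and the resulting cancellations $\Sigma_0 A V_m = V_m \Psi_m$ and $\Lambda_m = \Psi_m$, is the standard argument for this result. Your closing caveat is also well placed: as stated, $x_m = x_0 + V_m\bm{\psi}_m$ with $\bm{\psi}_m$ a subvector of the prior scalings only holds once one additionally invokes $\psi_i = v_i^\top r_0$ (the identity the paper quotes from \citet[Theorem~SM3.2]{Reid2020BayesCG}), i.e.\ for the Krylov prior's specific choice of scalings rather than for arbitrary diagonal $\Psi$.
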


The next theorem proposes a choice of scalings $\psi_i$ under which a particular measure of the size of the posterior covariance matches the error precisely.

\begin{theorem}[Theorem 3.7 of \cite{Reid2020BayesCG}] \label{thm:reid_phi}
    Let
    $$
    \psi_i = \frac{r_{i-1}^\top r_{i-1}}{(s_i^\top A s_i)^{\frac{1}{2}}},
    $$
    $i=1,\dots,d$.
    Then $\textup{tr}(A \Sigma_m) = \norm{x - x_m}_A^2$.
\end{theorem}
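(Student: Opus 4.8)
The plan is to evaluate both sides in the $A$-orthonormal basis $v_1, \dots, v_d$ furnished by the normalised search directions and show that each collapses to the same tail sum. The one structural fact I use repeatedly is that $V_d = S_d(S_d^\top A S_d)^{-\frac12}$ has $A$-orthonormal columns, i.e.\ $V_d^\top A V_d = I_d$; this is immediate from \cref{thm:direction_properties}, since the $A$-orthogonality of the $s_i$ makes $S_d^\top A S_d$ diagonal and $V_d$ its symmetric rescaling. In particular $v_i = s_i/(s_i^\top A s_i)^{\frac12}$ and $A^{-1} = V_d V_d^\top$. For the left-hand side, \cref{thm:reid_posterior} gives $\Sigma_m = \perpdirections_m \perpscalings_m \perpdirections_m^\top$, so by the cyclic property of the trace
$$
\textup{tr}(A \Sigma_m) = \textup{tr}\!\left(\perpscalings_m\, \perpdirections_m^\top A \perpdirections_m\right) = \textup{tr}(\perpscalings_m),
$$
because $\perpdirections_m^\top A \perpdirections_m = I_{d-m}$ by $A$-orthonormality; the left-hand side thus reduces to the sum of the tail scalings with no further work.

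For the right-hand side I would write $x - x_0 = A^{-1} r_0 = V_d V_d^\top r_0$ and subtract the posterior mean $x_m = x_0 + V_m V_m^\top r_0$ of \cref{thm:reid_posterior}, expressing the error as a tail expansion,
$$
x - x_m = \left(V_d V_d^\top - V_m V_m^\top\right) r_0 = \perpdirections_m \perpdirections_m^\top r_0 = \sum_{i=m+1}^d (v_i^\top r_0)\, v_i.
$$
Applying $A$-orthonormality once more gives $\norm{x - x_m}_A^2 = \sum_{i=m+1}^d (v_i^\top r_0)^2$, so proving the identity amounts to identifying the expansion coefficient $v_i^\top r_0 = (s_i^\top r_0)/(s_i^\top A s_i)^{\frac12}$ with the prescribed scaling $\psi_i$.

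The crux, and the step I expect to be the main obstacle, is the residual identity $s_i^\top r_0 = r_{i-1}^\top r_{i-1}$. I would establish it from two standard conjugate-gradient facts. First, telescoping $r_k = r_{k-1} - \alpha_k A s_k$ yields $r_0 - r_{i-1} = \sum_{k<i}\alpha_k A s_k$, and pairing with $s_i$ annihilates every term by the $A$-orthogonality of \cref{thm:direction_properties}, so $s_i^\top r_0 = s_i^\top r_{i-1}$. Second, the recursion $s_i = r_{i-1} + \beta_{i-1} s_{i-1}$ of \cref{def:directions_invprior} together with the Galerkin orthogonality $r_{i-1} \perp K_{i-1}(A, r_0) = \textup{span}(s_1,\dots,s_{i-1})$ gives $s_i^\top r_{i-1} = r_{i-1}^\top r_{i-1}$, the sole place where a genuinely CG-specific property enters.

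Substituting back produces $v_i^\top r_0 = r_{i-1}^\top r_{i-1}/(s_i^\top A s_i)^{\frac12} = \psi_i$, so that $\norm{x - x_m}_A^2 = \sum_{i=m+1}^d \psi_i^2$. Comparing with the trace computed above, the two tail sums agree term by term precisely when the diagonal entries of $\perpscalings_m$ are taken to be the \emph{squared} coefficients $\psi_i^2 = (v_i^\top r_0)^2$, which is the scaling realised by the Krylov prior; the identity $\textup{tr}(A\Sigma_m) = \norm{x - x_m}_A^2$ then follows immediately. As a cross-check one can instead derive the error by telescoping the classical per-step energy decrease $\norm{x-x_{i-1}}_A^2 - \norm{x-x_i}_A^2 = \alpha_i\, r_{i-1}^\top r_{i-1} = (r_{i-1}^\top r_{i-1})^2/(s_i^\top A s_i) = \psi_i^2$, which recovers the same tail sum and confirms the appearance of the square.
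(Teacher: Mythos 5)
Your proof is correct. Note that the paper itself gives no proof of this statement --- it is imported verbatim from \citet[Theorem~3.7]{Reid2020BayesCG} --- so there is no in-paper argument to compare against; your route (expand both sides in the $A$-orthonormal basis $V_d$, reduce the trace to $\textup{tr}(\perpscalings_m)$ and the error to the tail sum $\sum_{i>m}(v_i^\top r_0)^2$, then identify $v_i^\top r_0$ with $\psi_i$ via the standard CG identities $s_i^\top r_0 = s_i^\top r_{i-1} = r_{i-1}^\top r_{i-1}$) is the natural one, and every step checks out, including the energy-decrease cross-check $\norm{x-x_{i-1}}_A^2 - \norm{x-x_i}_A^2 = \alpha_i r_{i-1}^\top r_{i-1} = \psi_i^2$. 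The one substantive point is the issue you flag at the end: with $\perpscalings_m = \textup{diag}(\bar{\bm{\psi}}_m)$ as literally written in \cref{thm:reid_posterior}, the trace would be $\sum_{i>m}\psi_i$ rather than $\sum_{i>m}\psi_i^2$, and the theorem would be false. The identity requires the \emph{squared} scalings $\phi_i = \psi_i^2$ on the diagonal of the prior covariance, which is consistent with the construction $\Sigma_0 = V_K \Phi V_K^\top$ at the start of \cref{sec:krylov_prior}, with the paper's own remark that \citet{Reid2020BayesCG} works with $\phi_i = \psi_i^2$, and with the later use of $e_k \sim \mathcal{N}(-\psi_k, \psi_k^2)$ in \cref{sec:randomised_practical}. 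You resolved this correctly; it is a notational inconsistency in the paper's restatement of \cref{thm:reid_posterior,thm:reid_phi}, not a gap in your argument.
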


The rationale for the interest in $\textup{trace}(A \Sigma_m)$ is due to \citet[Lemma~3.5]{Reid2020BayesCG}, which states that $\mathbb{E}\| X - x_m \|^2_A = \textup{trace}(A\Sigma_m)$ when $X \sim N(x_m, \Sigma_m)$. 
Combining this with the result above, the intuition is that the expected $A$-norm distance between samples from $X$ and its mean matches the CG error with this particular choice of weights.

\begin{remark}
    Note that \cite{Reid2020BayesCG} represents these computations using $\phi_i = \psi_i^2$.
\end{remark}

\begin{remark}
    Under this choice of $\psi_i$ the prior and posterior covariance are dependent on $b$ (and thus $x$) and so this could be regarded as an empirical Bayesian approach.
    This does not preclude application of the calibration ideas defined in \cref{sec:calib}, since the learning procedure is only required to return a measure on $\reals^d$; it is permissible for this measure to have a complex dependance on $x$.
\end{remark}

From \citet[Theorem SM3.2]{Reid2020BayesCG}, we have that $\psi_i$ can be equivalently calculated as
$$
\psi_i = v_i^\top r_0 .
$$
It will also be helpful to establish the identity:
\begin{equation}
\frac{\psi_i}{(s_i^\top A s_i)^{\frac{1}{2}}} = \alpha_i
\end{equation}
where $\alpha_i$ is as given in \cref{alg:cg}.

While \cref{thm:reid_phi} shows that the width of the posterior covariance is ``about right'', the next theorem shows that with this choice of $\phi_i$, the posterior is decidedly not calibrated.

\begin{theorem} \label{thm:reid_bad_uq}
    With the choice from \cref{thm:reid_phi}, the posterior is not $\chi^2$-calibrated.
\end{theorem}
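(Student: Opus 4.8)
The plan is to evaluate the $Z$-statistic of \cref{def:calibration} in closed form and show that, under the scalings of \cref{thm:reid_phi}, it collapses to the deterministic value $d-m$; since $\chi^2_{d-m}$ is non-degenerate whenever $d-m \geq 1$, this rules out calibration at once. The whole argument is driven by working in the $A$-orthonormal Krylov basis $v_1, \dots, v_d$.

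First I would express both the error and the posterior covariance in that basis. Since $x - x_0 = A^{-1} r_0$ and $v_i^\top A (x - x_0) = v_i^\top r_0 = \psi_i$, the error decomposes as $x - x_0 = \sum_{i=1}^d \psi_i v_i$; combining with the posterior mean $x_m = x_0 + V_m \bm{\psi}_m$ from \cref{thm:reid_posterior} then gives $x_m - x = -\sum_{i=m+1}^d \psi_i v_i = -\perpdirections_m \bar{\bm{\psi}}_m$. For the covariance I would use the scalings $\phi_i = \psi_i^2$ identified in the remark following \cref{thm:reid_phi}, writing $\Sigma_m = \perpdirections_m \bar{\Phi}_m \perpdirections_m^\top$ with $\bar{\Phi}_m = \operatorname{diag}(\psi_{m+1}^2, \dots, \psi_d^2)$.

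The key step is to evaluate $Z = (x_m - x)^\top \Sigma_m^\dagger (x_m - x)$. The cleanest route is to factor $\Sigma_m = B B^\top$ with $B = \perpdirections_m \operatorname{diag}(\psi_{m+1}, \dots, \psi_d)$, which has full column rank under the full-grade assumption (so every $\psi_i = v_i^\top r_0 \neq 0$). Observing that $x_m - x = -B \mathbf{1}$ with $\mathbf{1}$ the all-ones vector, I would invoke the elementary identity $(B w)^\top (B B^\top)^\dagger (B w) = \norm{w}^2$, valid for any full-column-rank $B$ because $(B B^\top)^\dagger = B (B^\top B)^{-2} B^\top$ and $B^\top B$ is invertible. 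Taking $w = \mathbf{1}$ yields $Z = \norm{\mathbf{1}}^2 = d - m$, a constant independent of $x$. Because a non-degenerate $\chi^2_{d-m}$ has variance $2(d-m) > 0$, an almost-surely constant $Z(X)$ cannot follow that law, which finishes the argument.

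I expect the main obstacle to be handling the Moore--Penrose pseudoinverse correctly: it is taken with respect to the Euclidean inner product, whereas $v_1, \dots, v_d$ are only $A$-orthonormal, so the tempting shortcut $\Sigma_m^\dagger = \sum_{i>m} \phi_i^{-1} v_i v_i^\top$ is false, since the Euclidean Gram matrix $\perpdirections_m^\top \perpdirections_m$ is not the identity. Packaging the covariance as $B B^\top$ sidesteps this, as the Gram factor cancels in the sandwiched product $\perpdirections_m^\top \Sigma_m^\dagger \perpdirections_m = \bar{\Phi}_m^{-1}$; after that cancellation the result follows from the exact matching $\psi_i^2 / \phi_i = 1$ that the choice $\phi_i = \psi_i^2$ enforces. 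It is worth remarking that this collapse is the precise mechanistic reason for miscalibration: the scalings are tuned so that the posterior spread reproduces the realised error direction-by-direction, pinning $Z$ at the mean $d-m$ of the target law instead of letting it fluctuate as a genuine $\chi^2_{d-m}$ draw would.
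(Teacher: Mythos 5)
Your proposal is correct and follows the same route as the paper: both arguments reduce to the identity $Z(x) = d-m$ for all $x$, from which non-calibration is immediate because a Dirac mass at $d-m$ cannot equal the non-degenerate law $\chi^2_{d-m}$. The only difference is that the paper imports this identity wholesale from \citet[Theorem~4.13]{Reid2023UQ}, whereas you re-derive it from the Krylov-basis expansion of the error together with the pseudoinverse identity $(Bw)^\top(BB^\top)^\dagger(Bw) = \norm{w}^2$ for full-column-rank $B$ --- a correct, self-contained computation (and your reading of the covariance scalings as $\psi_i^2$ is the one consistent with \cref{thm:reid_phi} and the remark following it).
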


\begin{proof}
    From \citet[Theorem~4.13]{Reid2023UQ} $Z(x) = d-m$, independent of $x$.
    Therefore $Z(X)$ is a Dirac mass at $d-m$, and so the posterior is not $\chi^2$-calibrated.
\end{proof}

The proof given above highlights that the miscalibration is due to a \emph{perfect cancellation} of the randomness induced by randomising $x$ according to the prior, caused by the choice of scales in \cref{thm:reid_phi}.

\subsubsection{Practical Implementation} \label{sec:reid_practical}

It must be mentioned that while the above is theoretically appealing, it is not practical.
This is owing to the requirement that the full Krylov basis $V_d$ be computed in order to define the posterior.
The process of computing this basis is identical to that of running CG to convergence, which would make uncertainty quantification (UQ) irrelevant.

To address this, \cite{Reid2023UQ} propose to instead run CG for $t$ iterations, where $m < t \ll d$, and use the $t-m$ \emph{postiterations} to construct the posterior.
Due to the fast convergence of CG, this means that the most dominant subspace of $\textup{span}(\perpdirections_m)$ will still be captured with only a small number of postiterations.
On the other hand, the resulting posterior will assign zero mass to the space $\textup{span}(\perpdirections_t)$, falsely implying that there is no error in that space, which could be problematic for some applications.
\citet{Cockayne2019BayesCG} proposed an approach to address this by computing an orthonormal basis of $\perpdirections_t$ using a QR decomposition, but this did not seem to work well empirically.
We will explore the impact of this in \cref{sec:experiments}, but will leave attempts to correct it for future work.

A further objection to this idea is that, if one has the budget to perform $t$ iterations, why not use the better estimate of $x$ provided by these iterations with no UQ? We argue that in some applications and downstream tasks, having a better calibrated probabilistic linear solver is more useful than a more accurate (but still high error) estimate, a conclusion supported by results in \cite{poot2025bayesianfiniteelementmethod}. 
This idea is explored empirically in \cref{sec:inverse}. 

In the next section we propose our novel, randomised version of the Krylov prior that has more favourable calibration properties.

\section{Randomised Postiterations} \label{sec:methodology}

\subsection{A Randomised Krylov Prior}

First recall the following results from the proof of \cite[Theorem~4.13]{Reid2023UQ}:
\begin{align*}
    x &= x_0 + V_d \bm{\psi}_d \\
    x_m &= x_0 + V_m \bm{\psi}_m \\
    \Sigma_m &= \perpdirections_m \perpscalings_m \perpdirections_m^\top.
\end{align*}
An immediate consequence of this is that
\begin{align*}
x - x_m &= \perpdirections_m \bar{\bm{\psi}}_m.
\end{align*}
This leads us to consider \emph{randomising} $x_m$ to reintroduce the randomness that, as previously mentioned, is perfectly cancelled by the choice in \cref{thm:reid_phi}.

\begin{definition}[Randomised Postiterations] \label{def:randomised_posterior}
    Let $x_m$ be the posterior mean in \cref{eq:cg_posterior_mean}.
    The randomised posterior mean is given by $\tilde{x}_m = x_m + \perpdirections_m e_m$
    where $\perpdirections_m$ is as given in \cref{thm:reid_posterior} and
    $$e_m \sim \mathcal{N}(\bar{\bm{\psi}}_m, \perpscalings_m),$$
    where $\bar{\bm{\psi}}_m$ and $\perpscalings_m$ are as given in \cref{thm:reid_posterior,thm:reid_phi}.
\end{definition}

Our first result below provides a (pointwise) guarantee that the randomised posterior follows a distribution that is intuitively correct.
Notably this is stronger than required for \cref{def:calibration}.
\begin{theorem} \label{thm:perturbation}
    The randomised posterior from \cref{def:randomised_posterior} satisfies $L_m^\dagger (x - \tilde{x}_m) \sim \mathcal{N}(0, I_{d-m})$,
    where $L_m$ is an arbitrary left square-root of $\Sigma_m$.
\end{theorem}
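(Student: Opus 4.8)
The plan is to show directly that, for a fixed true solution $x$ (hence a fixed CG run, so that the only randomness is carried by $e_m$), the error $x - \tilde{x}_m$ is a centred Gaussian with covariance $\Sigma_m$, and then that left-multiplication by $L_m^\dagger$ whitens it into an isotropic Gaussian of the correct dimension.

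First I would substitute \cref{def:randomised_posterior} into the error and use the identity $x - x_m = \perpdirections_m \bar{\bm{\psi}}_m$ noted just above the statement:
\[
x - \tilde{x}_m = (x - x_m) - \perpdirections_m e_m = \perpdirections_m(\bar{\bm{\psi}}_m - e_m).
\]
Since $e_m \sim \mathcal{N}(\bar{\bm{\psi}}_m, \perpscalings_m)$, the recentred vector satisfies $\bar{\bm{\psi}}_m - e_m \sim \mathcal{N}(0, \perpscalings_m)$; the crucial point, built into \cref{def:randomised_posterior}, is that the mean of $e_m$ is taken to be \emph{exactly} the error-coefficient vector $\bar{\bm{\psi}}_m$, so the deterministic offset cancels. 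Pushing this centred Gaussian through the linear map $\perpdirections_m$ and invoking $\Sigma_m = \perpdirections_m \perpscalings_m \perpdirections_m^\top$ from \cref{thm:reid_posterior} gives $x - \tilde{x}_m \sim \mathcal{N}(0, \Sigma_m)$.

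It remains to whiten. Writing $Y := x - \tilde{x}_m$ and $\Sigma_m = L_m L_m^\top$, I would use that $\Sigma_m$ has rank $d-m$ (\cref{thm:posterior_support}), so a left square-root with $d-m$ columns has full column rank and hence $L_m^\dagger L_m = I_{d-m}$. As a linear image of a Gaussian is Gaussian, $L_m^\dagger Y$ is centred with covariance
\[
L_m^\dagger \Sigma_m (L_m^\dagger)^\top = (L_m^\dagger L_m)\,(L_m^\dagger L_m)^\top = I_{d-m},
\]
which yields $L_m^\dagger(x - \tilde{x}_m) \sim \mathcal{N}(0, I_{d-m})$. This computation never refers to the particular factor, so the conclusion holds for an arbitrary such $L_m$, as claimed; equivalently, any two full-column-rank left square-roots differ by right-multiplication by an orthogonal matrix, under which the standard normal law is invariant.

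I expect the only step needing care to be the identity $L_m^\dagger L_m = I_{d-m}$: it is exactly where the rank-$(d-m)$ structure of $\Sigma_m$ enters, and it is what forces $L_m$ to carry precisely $d-m$ columns for the target dimension to come out right. I would therefore state explicitly that ``left square-root'' means a $d \times (d-m)$ factor of full column rank, since a factor with extra (necessarily rank-deficient) columns would make $L_m^\dagger L_m$ a proper projection and produce a degenerate, lower-rank image rather than a genuine $\mathcal{N}(0, I_{d-m})$. The remaining steps are routine Gaussian push-forwards; conceptually, the whole argument hinges on the matching — fixed in \cref{def:randomised_posterior} via \cref{thm:reid_posterior,thm:reid_phi} — between the mean and covariance of $e_m$ and the true error and posterior scaling, which is precisely what reinstates the randomness that was perfectly cancelled in \cref{thm:reid_bad_uq}.
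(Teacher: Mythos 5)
Your proposal is correct and follows essentially the same route as the paper: both proofs reduce to the identity $x - \tilde{x}_m = \perpdirections_m(\bar{\bm{\psi}}_m - e_m)$ and observe that the construction of $e_m$ centres and scales this exactly. The only difference is organisational: the paper fixes the particular factor $L_m = \perpdirections_m \perpscalings_m^{1/2}$, writes down its pseudoinverse explicitly, and applies it directly to the error, whereas you first establish $x - \tilde{x}_m \sim \mathcal{N}(0, \Sigma_m)$ and then whiten with a generic full-column-rank factor via $L_m^\dagger L_m = I_{d-m}$. Your version in fact covers the theorem's claim for an \emph{arbitrary} left square-root (and correctly flags that this requires $L_m$ to be $d \times (d-m)$ of full column rank), which the paper's proof only verifies for one specific choice; that is a small but genuine improvement in rigour rather than a different method.
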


\begin{proof}
    First note that a left square-root of $\Sigma_m$ is given by
    $$
    L_m = \perpdirections_m\perpscalings_m^\frac{1}{2} .
    $$
    We can straightforwardly calculate the Moore-Penrose pseudoinverse of this as
    $$
    L_m^\dagger = \perpscalings_m^{-\frac{1}{2}} (\perpdirections_m^\top \perpdirections_m)^{-1} \perpdirections_m^\top.
    $$
    \cite[Lemma 4.12]{Reid2023UQ}. 
    We then have that
    \begin{align*}
        L_m^\dagger (x - \tilde{x}_m) &= L_m^\dagger(x - x_m) - L_m^\dagger \perpdirections_m e_m \\
        &= \perpscalings_m^{-\frac{1}{2}} (\perpdirections_m^\top \perpdirections_m)^{-1} \perpdirections_m^\top \perpdirections_m (\bar{\bm{\psi}}_m - e_m) \\
        &= \perpscalings_m^{-\frac{1}{2}}  (\bar{\bm{\psi}}_m - e_m)
    \end{align*}
    which clearly has the distribution stated in the theorem.
\end{proof}

\cref{thm:perturbation} has a notably different interpretation than other calibration results in the literature because the distribution is not induced by randomness in $x$ but by randomness in $\tilde{x}_m$.
Nevertheless we emphasise that because both $\perpdirections_m$ and $\perpscalings_m$ are functions of $x$, the posterior depends highly nontrivially on the true solution which complicates calibration analysis.
The next result establishes $\chi^2$-calibratedness of the randomised posterior.

\begin{corollary} \label{corr:calibration}
    The randomised posterior from \cref{def:randomised_posterior} is $\chi^2$-calibrated.
\end{corollary}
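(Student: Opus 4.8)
The plan is to derive $\chi^2$-calibration as a direct consequence of \cref{thm:perturbation}, by relating the $Z$-statistic of \cref{def:calibration} to the standardised vector $L_m^\dagger(x - \tilde{x}_m)$ that the theorem has already shown to be $\mathcal{N}(0, I_{d-m})$. First I would recall from \cref{def:calibration} that for the randomised posterior the relevant statistic is
\begin{equation*}
    Z(x) = (x - \tilde{x}_m)^\top \Sigma_m^\dagger (x - \tilde{x}_m),
\end{equation*}
and that the quantity whose distribution we must verify is $Z$ itself, with the randomness now coming from $\tilde{x}_m$ rather than from sampling $x$ from the prior (this is the reinterpretation flagged in the discussion after \cref{thm:perturbation}).

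The key step is the algebraic identity $\Sigma_m^\dagger = (L_m^\dagger)^\top L_m^\dagger$, where $L_m = \perpdirections_m \perpscalings_m^{1/2}$ is the left square-root from the proof of \cref{thm:perturbation}. This follows because $\Sigma_m = L_m L_m^\top$ and the Moore--Penrose pseudoinverse satisfies $(L_m L_m^\top)^\dagger = (L_m^\dagger)^\top L_m^\dagger$ when $L_m$ has full column rank, which holds here since $\perpdirections_m$ has $d-m$ linearly independent columns and $\perpscalings_m$ is invertible. Substituting this factorisation gives
\begin{equation*}
    Z(x) = (x - \tilde{x}_m)^\top (L_m^\dagger)^\top L_m^\dagger (x - \tilde{x}_m) = \norm{L_m^\dagger (x - \tilde{x}_m)}^2.
\end{equation*}
By \cref{thm:perturbation} the vector $w := L_m^\dagger(x - \tilde{x}_m)$ is distributed as $\mathcal{N}(0, I_{d-m})$, so $Z(x) = \norm{w}^2$ is a sum of $d-m$ independent squared standard normals, hence $Z(x) \sim \chi^2_{d-m}$, which is exactly the $\chi^2$-calibration condition.

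The main obstacle to watch is the pseudoinverse identity $\Sigma_m^\dagger = (L_m^\dagger)^\top L_m^\dagger$: in general $(BB^\top)^\dagger \neq (B^\dagger)^\top B^\dagger$, and the identity requires the full-column-rank structure of $L_m$, so I would state this rank condition explicitly and justify it from \cref{thm:reid_posterior} (where $\perpdirections_m = [v_{m+1} \dots v_d]$ consists of the $A$-orthonormal, hence linearly independent, tail directions). A secondary subtlety is that \cref{def:calibration} nominally randomises $X$ according to the prior $\mu_0$, whereas here the source of randomness is the injected noise $e_m$ in $\tilde{x}_m$; I would note, as the remark following \cref{thm:perturbation} does, that \cref{thm:perturbation} already delivers the required standard-normal law directly, so the corollary follows immediately and in fact under a pointwise-in-$x$ guarantee that is stronger than the nominal calibration definition demands.
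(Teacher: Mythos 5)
Your proposal is correct and follows essentially the same route as the paper's proof: factor $\Sigma_m^\dagger = (L_m^\dagger)^\top L_m^\dagger$, rewrite $Z(x) = \norm{L_m^\dagger(x - \tilde{x}_m)}_2^2$, and invoke \cref{thm:perturbation}. The only difference is that you explicitly justify the pseudoinverse identity via the full column rank of $L_m$, a step the paper asserts without comment; this is a worthwhile addition rather than a divergence.
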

\begin{proof}
    Since $\Sigma_m = L_m L_m^\top$, it follows that $\Sigma_m^\dagger = (L_m^\top)^\dagger L_m^\dagger$.
    We therefore have that
    $$
    Z(x) = \norm{L_m^\dagger (x - \tilde{x}_m)}_2^2.
    $$
    From \cref{thm:perturbation}, this is the squared norm of a $\mathcal{N}(0, I_{d-m})$ distributed random variable, which completes the proof.
\end{proof}

An interesting property of \cref{thm:perturbation} and \cref{corr:calibration} is that both results are independent of the choice of $\perpscalings_i$; because $e_m$ has this as its covariance, the correct distribution will always be obtained.
This reflects a general point of calibration results, that calibration is a property that is independent of other favourable properties of the posterior, such as rate of concentration, or indeed, whether the posterior concentrates at all (e.g.\ provided the covariance and the mean error diverge commensurately, the posterior can still be calibrated).
To ensure that the posterior reflects the true error we recommend setting $\perpscalings_i$ in the same way as  in \cref{thm:reid_phi}.
Since these quantities are already computed during the postiterations, this does not change the cost of computation.

\begin{algorithm}[t]
    \begin{algorithmic}[1]
    \Procedure{\textsc{cg\_rpi}}{$A$, $b$, $x_0$, $\epsilon_1$, $\epsilon_2$}
    \State $r_0 \gets b - Ax_0$
    \State $s_1 \gets r_0$
    \State $k \gets 1$
    \State $\tilde{x}_k \gets \textsc{nothing}$
    \State $L_k \gets [\;] \in \reals^{d \times 0}$
    \While{\textbf{true}}
        \State $\alpha_k \gets \frac{r_{k-1}^\top r_{k-1}}{s_k^\top A s_k}$
        \State $x_k \gets x_{k-1} + \alpha_k s_k$
        \State $r_k \gets r_{k-1} - \alpha_k A s_k$
        \State $\beta_k \gets \frac{r_k^\top r_k}{r_{k-1}^\top r_{k-1}}$
        \If{$\neg\textsc{isnothing}(\tilde{x}_k)$}
            \If{$\|r_k\| > \epsilon_2 \norm{b}$}
                \State $z_k \sim \mathcal{N}(0, 1)$
                \State $\tilde{x}_k \gets \tilde{x}_{k-1} + \alpha_k (z_k + 1) s_k$
                \State $L_k \gets \begin{bmatrix} L_{k-1} & \alpha_k s_k\end{bmatrix}$
            \Else
                \State \textbf{break}
            \EndIf
        \ElsIf{$\|r_k\| \leq \epsilon_1 \norm{b}$}
            \State $\tilde{x}_k \gets x_k$
        \EndIf
        \State $k \gets k+1$
        \State $s_k \gets r_{k-1} + \beta_{k-1} s_{k-1}$
    \EndWhile
    \State \textbf{return} $\tilde{x}_k, L_k $
    \EndProcedure
    \end{algorithmic}
\caption{The Randomised Postiteration procedure described in \cref{sec:methodology}. } \label{alg:randomised_postiteration}
\end{algorithm}

\subsubsection{Practical Implementation} \label{sec:randomised_practical}

We now consider practical implementation of the procedure described in \cref{thm:perturbation}.
Two main optimisations are discussed below, with the resulting algorithm given in \cref{alg:randomised_postiteration}.

\paragraph{Recursion with Unnormalised Directions}
For the first $m$ iterations the algorithm is identical to \cref{alg:cg}, so we focus on the differences in the postiteration period where $k > m$.
Note that $\tilde{x}_k$ can be computed iteratively by setting $\tilde{x}_m = x_m$, and $\tilde{x}_k = \tilde{x}_{k-1} + v_k e_k$, where $e_k \sim \mathcal{N}(-\psi_k, \psi_k^2)$.
Next note that typically computations are performed using the directions $s_k$ rather than $v_k$ for stability.
The randomised posterior mean can therefore be computed as
\begin{align*}
\tilde{x}_k &= \tilde{x}_{k-1} + \left( \frac{\psi_k}{(s_k^\top A s_k)^{\frac{1}{2}}} + \frac{\psi_k}{(s_k^\top A s_k)^{\frac{1}{2}}} z_k \right) s_k  \\
&= \tilde{x}_{k-1} + \alpha_k\left(z_k + 1 \right) s_k
\end{align*}
with $z_k \sim \mathcal{N}(0, 1)$.

Note that it is also straightforward to calculate the posterior covariance recursively.
As in \cite{Cockayne2019BayesCG} we opt to represent this using the low-rank factors $L_k$, where $L_m = [\;] \in \reals^{d \times 0}$ and
$$
L_k = \begin{bmatrix} L_{k-1} & \alpha_k s_k \end{bmatrix}
$$
for $k > m$.
Then we have that $\Sigma_k = L_k L_k^\top$.

\paragraph{Truncated Postiterations}
As was discussed in \cref{sec:reid_practical} it is not possible to realise $\bar{V}_m$ in practice.
We propose to adopt the same approach discussed there of truncating the postiterations.
That is, we perform $t$ total iterations of CG, $m < t \ll d$, and use the $t-m$ postiterations to calibrate the posterior.
To implement this truncation we introduce a second tolerance, $\epsilon_2 < \epsilon_1$ in \cref{alg:randomised_postiteration}.
The parameter $\epsilon_1$ controls at which point standard CG iterations are terminated as with $\epsilon$ in \cref{alg:cg}, while $\epsilon_2$ controls how many postiterations are performed.
That is, $\epsilon_1$ defines $m$ while $\epsilon_2$ defines $t$. Standard CG is then recovered in the case where $\epsilon_1 = \epsilon_2$.
To avoid computing too many further postiterations we suggest setting $\epsilon_2 = \delta \epsilon_1$ for some $0 < \delta < 1$ (e.g.\ $\delta = 0.1$).
The impact of the choice of $\delta$ will be explored in \cref{sec:experiments}.

\section{Experiments} \label{sec:experiments}

\subsection{Simulation-Based Calibration}

Since \cref{corr:calibration} provides fairly weak calibration guarantees, we begin by examining posterior calibratedness empirically.
To do this we employ the simulation-based calibration (SBC) test of \cite{Talts2018SBC}, described in \cref{alg:sbc}.
We use a Gaussian prior $\mu_0 = \mathcal{N}(0, A^{-1})$, and set $A = W D W^\top$, where $W$ is an orthonormal matrix drawn from the Haar measure on such matrices, while $D$ is diagonal with strictly positive entries drawn independently from Exp(1). All SBC experiments use a symmetric positive definite matrix $A \in \reals^{d\times d}$ of dimension $d=100$. This matrix is kept fixed across the simulations for individual runs of the SBC test (\cref{fig:sbc_hist_e-2,fig:sbc_hist_e-5,fig:sbc_hist_TR}) but redrawn for repeated runs (\cref{fig:ks_eps2_plot}).
If calibrated, the histogram of implied posterior CDF evaluations should be distributed as $\mathcal{U}(0,1)$.

\begin{algorithm}[t!]
    \caption{Simulation-Based Calibration. $\Phi$ denotes the CDF of the standard Gaussian distribution.} \label{alg:sbc}
    \begin{algorithmic}[1]
    \Require Prior $\mu_0 = \mathcal{N}(u_0, A^{-1})$, number of simulations $N_\textsc{sim}$, test vector $w$.
    \For{$i = 1$ to $N_\textsc{sim}$}
        \State $x^{(i)} \sim \mu_0$
        \State $b^{(i)} \gets A x^{(i)}$
        \State $x_m^{(i)}, L_m^{(i)} = \textsc{rpi}(A, b^{(i)}, x_0, \epsilon_1, \epsilon_2)$
        \State $t_i = \Phi\left(\frac{w^\top (x_m^{(i)} - x^{(i)})}{(w^\top L_m^{(i)} (L_m^{(i)})^\top w)^\frac{1}{2}}\right)$
    \EndFor
    \State Plot histogram of $\{t_i\}_{i=1}^{N_\textsc{sim}}$ and compare to $\mathcal{U}(0,1)$.
    \end{algorithmic}
\end{algorithm}

\subsubsection{Results}

The results of the simulation-based calibration tests demonstrate that the posterior is indeed calibrated for sufficiently small choices of $\epsilon_2$. 

\cref{fig:sbc_hist_TR} presents the results from $10,000$ simulations of the procedure outlined in \cref{alg:sbc}, applied to the postiterations approach of \citet{Reid2023UQ}, with $\epsilon_2 = 10^{-5}$ and $\epsilon_1=10^{-1}$.  As anticipated, the histogram exhibits a $\cup$-shaped pattern, indicating poor calibration with the computed posteriors being, on average, under-dispersed relative to the true posterior \citep{Talts2018SBC}. In comparison, the results from the randomised version of the postiteration algorithm, with the same number of simulations and choice of the thresholds appear uniform in \cref{fig:sbc_hist_e-5}, suggesting that the posterior is well-calibrated for this choice of thresholds.

However, the randomised algorithm does exhibit under-dispersed posteriors for larger choices of $\epsilon_2$. This is seen in \cref{fig:sbc_hist_e-2} where $\epsilon_2 = 10^{-2}$ and $\epsilon_1=10^{-1}$. We again observe the characteristic $\cup$-shaped histogram indicating generally over-confident posteriors but note that it is less extreme than the results from \cref{fig:sbc_hist_TR}, even with the relatively larger $\epsilon_2$.

A quantitative comparison of the two algorithms is provided in \cref{fig:ks_eps2_plot}. The Kolmogorov-Smirnov (KS) test statistic is used to estimate the difference between the SBC results and the standard uniform distribution, for different choices of thresholds. Results were obtained by averaging the KS statistics from 1000 SBC runs, each using 1000 simulations. The median KS statistics along with their interquartile range are plotted. The grey line indicates the average KS statistic between samples from the standard uniform and the standard uniform itself. While the deterministic postiteration algorithm maintains the same level of poor calibration across the varying thresholds, the randomised algorithm's SBC results converges to being uniformly distributed and therefore calibrated.

\begin{figure*}
\begin{subfigure}{0.5\textwidth}
    \includegraphics{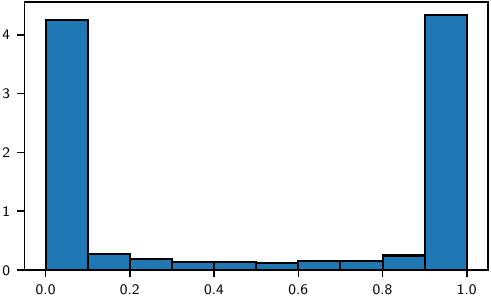}
    \caption{\textsc{pi}; $\epsilon_1=10^{-1}$, $\epsilon_2 = 10^{-5}$.} \label{fig:sbc_hist_TR}
\end{subfigure}
\begin{subfigure}{0.5\textwidth}
    \includegraphics{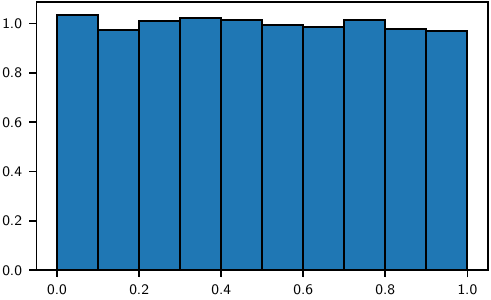}
    \caption{\textsc{rpi}; $\epsilon_1=10^{-1}$, $\epsilon_2 = 10^{-5}$.} \label{fig:sbc_hist_e-5}
\end{subfigure}

\begin{subfigure}{0.5\textwidth}
    \includegraphics{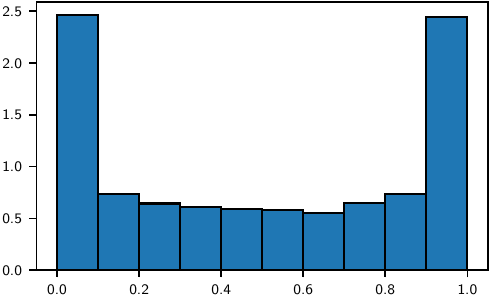}
    \caption{\textsc{rpi}; $\epsilon_1=10^{-1}$, $\epsilon_2 = 10^{-2}$.} \label{fig:sbc_hist_e-2}
\end{subfigure}
\begin{subfigure}{0.5\textwidth}
    \includegraphics{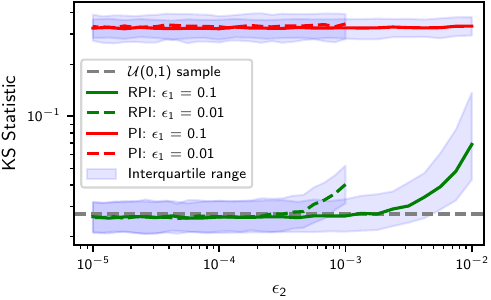}
    \caption{Kolmogorov-Smirnov test statistics for varying $\epsilon_1$ and $\epsilon_2$. } \label{fig:ks_eps2_plot}
\end{subfigure}
\caption{Simulation-based calibration test results. \cref{fig:sbc_hist_TR,fig:sbc_hist_e-5,fig:sbc_hist_e-2} give results for different values of $\epsilon_1, \epsilon_2$ while \cref{fig:ks_eps2_plot} gives KS test statistics as a function of $\epsilon_2$, for multiple replicates of the experiment.}
\end{figure*}

\begin{remark}[Calibration, for what measure?]
    While \cref{alg:sbc} tests for calibration with-respect-to $\mathcal{N}(x_0, A^{-1})$, in fact \cref{thm:perturbation} and \cref{corr:calibration} show that we should achieve calibration \emph{independent} of the prior.
    This is because \cref{alg:randomised_postiteration} is independent of the prior.
    We have opted to use $A^{-1}$ in \cref{alg:sbc} only because this is the prior covariance under which BayesCG coincides with CG.
\end{remark}

\subsection{Inverse Problem} \label{sec:inverse}

We now consider incorporating the randomised postiteration procedure into an inverse problem.
This has been considered several times in the literature, including for linear solvers \citep[see][]{Cockayne2019BayesCG}.
We consider a linear system arising from the discretisation of a partial differential equation describing steady-state flow through a porous medium, given as:
\begin{align*}
    -\nabla\cdot(k(\spatialvar; \theta) \nabla u(\spatialvar; \theta)) &= f(\spatialvar) & \spatialvar&\in D\\
    u(\spatialvar; \theta) &= b(\spatialvar) & \spatialvar &\in \partial D_1 \\
    \nabla u(\spatialvar; \theta) \cdot n &= 0 & \spatialvar &\in \partial D_2
\end{align*}
where $n$ is the outward pointing normal vector.
For the domains we take
\begin{subequations}
\begin{align}
    D &= (0,1)^2 \\
    D_1 &= \bigcup_{\spatialvar_1\in [0,1]} \set{(\spatialvar_1,0), (\spatialvar_1,1)} \\
    D_2 &= \bigcup_{\spatialvar_2\in [0,1]} \set{(0,\spatialvar_2), (1,\spatialvar_2)}.
\end{align} \label{eq:inverse_pde}
\end{subequations}
In other words, $D_1$ is the top and bottom edges of the unit square and $D_2$ is the left and right edges.
For the forcing we take $f(\spatialvar) = 0$ and for the boundary term we take $b(\spatialvar) = (1-\spatialvar_1)(1-\spatialvar_2) + \spatialvar_1 \spatialvar_2$.

The inverse porosity $k(\spatialvar; \theta)$ is taken to be
$$
k(\spatialvar; \theta) = 1 +  \mathbb{1}_{D_\theta}(\spatialvar) \cdot \exp(\theta)
$$
where $\mathbb{1}_{D_\theta}(\spatialvar)=1$ if $\spatialvar \in D_\theta$ and $0$ elsewhere.
We take $D_\theta = [0.25, 0.75]^2$.
Thus this problem represents steady-state corner to corner flow through a domain with a low porosity central region that impedes the flow.

We set this up as a Bayesian inference problem to determine $\theta$.
To accomplish this we use the data-generating model
$$
y_i = f(\spatialvar_i) + \zeta_i, i=1,\dots,N.
$$
We take $N=4$ and $\spatialvar_1, \dots, \spatialvar_4$ are taken to be the four corners of the region $D_\theta$, while $\zeta_i \stackrel{\text{iid}}{\sim} \mathcal{N}(0, \sigma^2)$, $\sigma = 0.01$.
Data was generated using the data-generating parameter $\theta^\dagger = 2$.
Letting $\bm{z} = [z_1,\dots,z_N]$ this implies a Gaussian likelihood
\begin{equation} \label{eq:likelihood}
p(y \mid \theta) = \mathcal{N}(y; u(\bm{\spatialvar}; \theta), \sigma^2 I).
\end{equation}
We endow $\theta$ with a Gaussian prior $\theta \sim \mathcal{N}(0,1)$.
Note however that the posterior $p(\theta \mid y)$ is non-Gaussian due to the nonlinear dependence of $u$ on $\theta$.
We will therefore sample the posterior using Markov-chain Monte-Carlo (MCMC) methods.

To turn this into a linear system we discretise the PDE in \cref{eq:inverse_pde} using the finite-element method, as implemented in the Julia package \texttt{Gridap} \citep{Badia2020,Verdugo2022}.
We omit most of the details here, but the discretisation results in a (sparse) linear system $A_\theta x_\theta = b$ that must be solved for each $\theta$ to evaluate the likelihood, where $b$ is independent of $\theta$ and $x_\theta$ has components that approximate the solution $u(\spatialvar; \theta)$ at the nodal points on a fine mesh over the domain.
The points in $\bm{\spatialvar}$ are included in the nodal point set, and so the value of $u(\bm{\spatialvar}; \theta)$ may be related to $x_\theta$ as
$$
u(\bm{\spatialvar}; \theta) \approx w x_\theta
$$
for some $w \in \reals^{N \times d}$, whose entries are either $0$ or $1$, to ``pick out'' the degrees of freedom in $x_\theta$ associated with $\bm{z}$.
The likelihood in \cref{eq:likelihood} is thus approximated as
$$
p(y \mid \theta) \approx \hat{p}(y \mid \theta) := \mathcal{N}(y; w x_\theta, \sigma^2 I).
$$
The grid size for this problem was such that the dimension of the systems involved is $d = 475$. 
Naturally this approach introduces discretisation error due to the use of an approximate PDE solver, but we will assume for this work that the system is discretised finely enough that the error is negligible.

We consider three approaches to solve the linear system:
\begin{enumerate}[leftmargin=4em]
    \item[\textsc{exact}:] Explicit solution using the sparse Cholesky routine \texttt{CHOLMOD} \citep{Chen2008} as is default in Julia for sparse Hermitian matrices.
    \item[\textsc{cg}:] CG with fixed tolerance $\epsilon$.
    \item[\textsc{pi}:] Postiterations in the style of \citet{Reid2023UQ}, with tolerances $\epsilon_2 = \epsilon$ and $\epsilon_1 = 10 \epsilon_2$.
    \item[\textsc{rpi}:] Randomised postiterations with the same tolerances as \textsc{pi}.
\end{enumerate}
The comparison between \textsc{cg} and \textsc{rpi} is intended to demonstrate that, in a constrained computing environment, it is sometimes better to use some of the computational budget to perform postiterations and calibrate the posterior, rather than spending all iterations on CG.
When using postiterations we modify the likelihood from \cref{eq:likelihood} to incorporate uncertainty from the probabilistic solver as has been described several times in the literature (e.g.\ \cite{Cockayne2019BayesCG}, which described this approach for linear systems).
The resulting likelihood for \textsc{rpi} is given by
$$
p_\textsc{pn}(y \mid \theta) = \mathcal{N}(y; w \tilde{x}_m(\theta), \sigma^2 I + w \Sigma_m(\theta) w^\top)
$$
while for \textsc{pi} we replace $\tilde{x}_m$ with $x_m$.
Thus, this likelihood incorporates the uncertainty due to postiterations by inflating the likelihood variance with the factor $w \Sigma_m(\theta) w^\top$.

For this simple inverse problem we applied the random walk Metropolis algorithm \citep{Roberts2004}.
We used Gaussian proposals $\theta_n \mid \theta_{n-1} \sim \mathcal{N}(\theta_{n-1}, \eta)$ with $\eta = 0.2$ for \textsc{exact} and \textsc{cg}, and $0.4$ for \textsc{pi} and \textsc{rpi} (which have wider posteriors due to the variance inflation).
We performed $N = 10,000$ iterations for each chain.
In \cref{fig:inverse} kernel density estimates of the three posteriors are displayed with $\epsilon=0.1$, to ensure that the error is still larger than the noise level.
As can be seen, the MAP point of \textsc{exact} is close to $\theta^\dagger$, while the other methods show some bias: \textsc{cg} and \textsc{pi} positive and negative bias, while for \textsc{rpi} the bias is negative but much smaller.
Notably, however, the posterior for \textsc{rpi} is significantly wider than for \textsc{cg} or \textsc{pi}, i.e.\ the posterior better represents the uncertainty due to computation, and places more mass around $\theta^\dagger$.
Furthermore, both \textsc{rpi} and \textsc{pi} are also much closer to the prior, again reflecting increased scepticism about the data due to reduced computation.
\begin{figure}
    \includegraphics{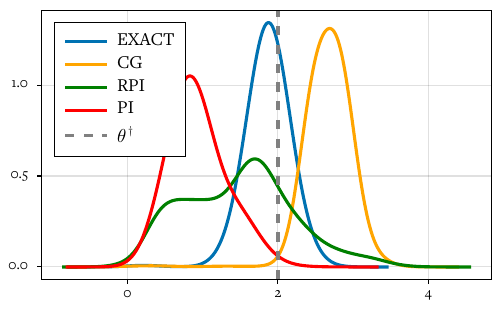}
    \caption{Posteriors for the inverse problem described in \cref{sec:inverse}. } \label{fig:inverse}
\end{figure}

\section{Conclusion} \label{sec:conclusion}

In this paper we have introduced a randomised version of the postiteration algorithm from \cite{Reid2023UQ} that has improved calibration properties.
These properties have been examined empirically, and we have performed a more extensive computational examination.
We have also shown how using some of a constrained computational budget for calibration can give better results in uncertainty propagation problems, such as inverse problems, than using all of the budget to compute a low-accuracy estimator.

There are several avenues for future work.
Firstly, the proposed procedure still suffers from the deficiency of \cite{Reid2023UQ}, that (when truncated) it assigns zero posterior mass to a subspace in which there is still error.
It would be useful to examine whether techniques such as those described in \cite{Cockayne2019BayesCG} can be extended to correct this, by assigning mass to this space without incurring the overhead of computing a basis for that space.
Secondly, the calibration results we have derived are not as strong as we hoped.
This is largely due to issues with the definition of strong calibration, which make it challenging to establish results for singular posteriors which arise in probabilistic linear solvers.
A more flexible definition of strong calibration would be needed to address this, but this is beyond the scope of this paper.

A second line of work is in applications.
Recent papers \citep{Poot2024} have proposed strategies for calibrating the posterior in Bayesian finite element solvers that are similar to the postiteration ideas in \cite{Reid2023UQ}.
It would be interesting to explore whether the randomisation strategies discussed in this paper can be applied to that infinite-dimensional setting to yield similar calibration improvements.

\section*{Acknowledgements}
NV was supported by the UKRI AI Centre for Doctoral Training in AI for Sustainability [EP/Y030702/1].
JC was supported by EPSRC grant EP/\allowbreak{}FY001028/F1.

\bibliography{citations.bib}

\begin{thebibliography}{19}
\providecommand{\natexlab}[1]{#1}
\providecommand{\url}[1]{\texttt{#1}}
\expandafter\ifx\csname urlstyle\endcsname\relax
  \providecommand{\doi}[1]{doi: #1}\else
  \providecommand{\doi}{doi: \begingroup \urlstyle{rm}\Url}\fi

\bibitem[Badia and Verdugo(2020)]{Badia2020}
S.~Badia and F.~Verdugo.
\newblock Gridap: An extensible finite element toolbox in {Julia}.
\newblock \emph{Journal of Open Source Software}, 5\penalty0 (52):\penalty0 2520, 2020.
\newblock \doi{10.21105/joss.02520}.
\newblock URL \url{https://doi.org/10.21105/joss.02520}.

\bibitem[Chen et~al.(2008)Chen, Davis, Hager, and Rajamanickam]{Chen2008}
Y.~Chen, T.~A. Davis, W.~W. Hager, and S.~Rajamanickam.
\newblock Algorithm 887: {CHOLMOD}, supernodal sparse {Cholesky} factorization and update/downdate.
\newblock \emph{ACM Transactions on Mathematical Software}, 35\penalty0 (3):\penalty0 1–14, Oct. 2008.
\newblock ISSN 1557-7295.
\newblock \doi{10.1145/1391989.1391995}.
\newblock URL \url{http://dx.doi.org/10.1145/1391989.1391995}.

\bibitem[Cockayne et~al.(2019)Cockayne, Oates, Ipsen, and Girolami]{Cockayne2019BayesCG}
J.~Cockayne, C.~J. Oates, I.~C.~F. Ipsen, and M.~Girolami.
\newblock A {Bayesian} conjugate gradient method (with discussion).
\newblock \emph{Bayesian Anal.}, 14\penalty0 (3):\penalty0 937--1012, 2019.
\newblock ISSN 1936-0975,1931-6690.
\newblock \doi{10.1214/19-BA1145}.
\newblock URL \url{https://doi.org/10.1214/19-BA1145}.
\newblock Includes 6 discussions and a rejoinder from the authors.

\bibitem[Cockayne et~al.(2021)Cockayne, Ipsen, Oates, and Reid]{Cockayne2021Iterative}
J.~Cockayne, I.~C. Ipsen, C.~J. Oates, and T.~W. Reid.
\newblock Probabilistic iterative methods for linear systems.
\newblock \emph{Journal of Machine Learning Research}, 22\penalty0 (232):\penalty0 1--34, 2021.
\newblock URL \url{http://jmlr.org/papers/v22/21-0031.html}.

\bibitem[Cockayne et~al.(2022)Cockayne, Graham, Oates, Sullivan, and Teymur]{Cockayne2022Calibrated}
J.~Cockayne, M.~M. Graham, C.~J. Oates, T.~J. Sullivan, and O.~Teymur.
\newblock Testing whether a learning procedure is calibrated.
\newblock \emph{Journal of Machine Learning Research}, 23\penalty0 (203):\penalty0 1--36, 2022.
\newblock URL \url{http://jmlr.org/papers/v23/21-1065.html}.

\bibitem[Golub and Van~Loan(2013)]{Golub2013MatrixComputations}
G.~H. Golub and C.~F. Van~Loan.
\newblock \emph{Matrix computations}.
\newblock Johns Hopkins Studies in the Mathematical Sciences. Johns Hopkins University Press, Baltimore, MD, fourth edition, 2013.
\newblock ISBN 978-1-4214-0794-4; 1-4214-0794-9; 978-1-4214-0859-0.

\bibitem[Hegde et~al.(2025)Hegde, Adil, and Cockayne]{Hegde2025GaussSeidel}
D.~Hegde, M.~Adil, and J.~Cockayne.
\newblock Calibrated computation-aware {Gaussian} processes.
\newblock In \emph{International Conference on Artificial Intelligence and Statistics}, pages 2098--2106. PMLR, 2025.
\newblock URL \url{https://arxiv.org/abs/2410.08796}.

\bibitem[Li and Fang(2019)]{Li2019Discussion}
X.~Li and E.~X. Fang.
\newblock Invited discussion.
\newblock Published as part of the discussion of \cite{Cockayne2019BayesCG}, 2019.

\bibitem[Pf\"{o}rtner et~al.(2025)Pf\"{o}rtner, Wenger, Cockayne, and Hennig]{Pfortner2025CAGP}
M.~Pf\"{o}rtner, J.~Wenger, J.~Cockayne, and P.~Hennig.
\newblock Computation-aware {K}alman filtering and smoothing.
\newblock In \emph{International Conference on Artificial Intelligence and Statistics}, pages 2071--2079. PMLR, 2025.
\newblock \doi{10.48550/ARXIV.2405.08971}.
\newblock URL \url{https://arxiv.org/abs/2405.08971}.

\bibitem[Poot et~al.(2024)Poot, Kerfriden, Rocha, and van~der Meer]{Poot2024}
A.~Poot, P.~Kerfriden, I.~Rocha, and F.~van~der Meer.
\newblock A bayesian approach to modeling finite element discretization error.
\newblock \emph{Statistics and Computing}, 34\penalty0 (5), Aug. 2024.
\newblock ISSN 1573-1375.
\newblock \doi{10.1007/s11222-024-10463-z}.
\newblock URL \url{http://dx.doi.org/10.1007/s11222-024-10463-z}.

\bibitem[Poot et~al.(2025)Poot, Rocha, Kerfriden, and van~der Meeer]{poot2025bayesianfiniteelementmethod}
A.~Poot, I.~Rocha, P.~Kerfriden, and F.~van~der Meeer.
\newblock The {B}ayesian finite element method in inverse problems: a critical comparison between probabilistic models for discretization error, 2025.
\newblock URL \url{https://arxiv.org/abs/2506.02815}.

\bibitem[Reid et~al.(2020)Reid, Ipsen, Cockayne, and Oates]{Reid2020BayesCG}
T.~W. Reid, I.~C.~F. Ipsen, J.~Cockayne, and C.~J. Oates.
\newblock Bayes{C}{G} as an uncertainty aware version of {CG}, 2020.
\newblock URL \url{https://arxiv.org/abs/2008.03225}.

\bibitem[Reid et~al.(2023)Reid, Ipsen, Cockayne, and Oates]{Reid2023UQ}
T.~W. Reid, I.~C.~F. Ipsen, J.~Cockayne, and C.~J. Oates.
\newblock Statistical properties of {Bayes{C}{G}} under the {Krylov} prior.
\newblock \emph{Numer. Math.}, 155\penalty0 (3-4):\penalty0 239--288, 2023.
\newblock ISSN 0029-599X,0945-3245.
\newblock \doi{10.1007/s00211-023-01375-7}.
\newblock URL \url{https://doi.org/10.1007/s00211-023-01375-7}.

\bibitem[Roberts and Rosenthal(2004)]{Roberts2004}
G.~O. Roberts and J.~S. Rosenthal.
\newblock General state space {Markov} chains and {M}{C}{M}{C} algorithms.
\newblock \emph{Probability Surveys}, 1\penalty0 (none), Jan. 2004.
\newblock ISSN 1549-5787.
\newblock \doi{10.1214/154957804100000024}.
\newblock URL \url{http://dx.doi.org/10.1214/154957804100000024}.

\bibitem[Stankewitz and Szabo(2024)]{Stankewitz2024CAGP}
B.~Stankewitz and B.~Szabo.
\newblock Contraction rates for conjugate gradient and {Lanczos} approximate posteriors in {Gaussian} process regression, 2024.
\newblock URL \url{https://arxiv.org/abs/2406.12678}.

\bibitem[Talts et~al.(2018)Talts, Betancourt, Simpson, Vehtari, and Gelman]{Talts2018SBC}
S.~Talts, M.~Betancourt, D.~Simpson, A.~Vehtari, and A.~Gelman.
\newblock Validating {Bayesian} inference algorithms with simulation-based calibration, 2018.
\newblock URL \url{https://arxiv.org/abs/1804.06788}.

\bibitem[Verdugo and Badia(2022)]{Verdugo2022}
F.~Verdugo and S.~Badia.
\newblock The software design of {Gridap}: A finite element package based on the {Julia} {JIT} compiler.
\newblock \emph{Computer Physics Communications}, 276:\penalty0 108341, July 2022.
\newblock \doi{10.1016/j.cpc.2022.108341}.
\newblock URL \url{https://doi.org/10.1016/j.cpc.2022.108341}.

\bibitem[Wenger et~al.(2022)Wenger, Pleiss, Pf{\"o}rtner, Hennig, and Cunningham]{Wenger2022CAGP}
J.~Wenger, G.~Pleiss, M.~Pf{\"o}rtner, P.~Hennig, and J.~P. Cunningham.
\newblock Posterior and computational uncertainty in {Gaussian} processes.
\newblock In \emph{Advances in Neural Information Processing Systems (NeurIPS)}, 2022.
\newblock \doi{10.48550/arXiv.2205.15449}.
\newblock URL \url{https://arxiv.org/abs/2205.15449}.

\bibitem[Wenger et~al.(2024)Wenger, Wu, Hennig, Gardner, Pleiss, and Cunningham]{Wenger2024CAGPModelSelection}
J.~Wenger, K.~Wu, P.~Hennig, J.~R. Gardner, G.~Pleiss, and J.~P. Cunningham.
\newblock Computation-aware {Gaussian} processes: Model selection and linear-time inference.
\newblock In \emph{Advances in Neural Information Processing Systems (NeurIPS)}, 2024.
\newblock \doi{10.48550/arXiv.2411.01036}.
\newblock URL \url{https://arxiv.org/abs/2411.01036}.

\end{thebibliography}
\end{document}